\newtheorem{theorem}{Theorem}[section]
\newtheorem{lemma}[theorem]{Lemma}
\begin{document}
\title{Autonomous Localization and Mapping Using a Single Mobile Device}
%
\author{Tiexing Wang, Fangrong Peng and Biao Chen}


%
\maketitle
\begin{abstract}
This paper considers the problem of simultaneous $2$-$D$ room shape reconstruction and self-localization without the requirement of any pre-established infrastructure. A mobile device equipped with co-located microphone and loudspeaker as well as internal motion sensors is used to emit acoustic pulses and collect echoes reflected by the walls. Using only first order echoes, room shape recovery and self-localization is feasible when auxiliary information is obtained using motion sensors. In particular, it is established that using echoes collected at three measurement locations and the two distances between consecutive measurement points, unique localization and mapping can be achieved provided that the three measurement points are not collinear. Practical algorithms for room shape reconstruction and self-localization in the presence of noise and higher order echoes are proposed along with experimental results to demonstrate the effectiveness of the proposed approach.
\end{abstract}

\begin{IEEEkeywords}
$2$-$D$ room shape recovery, self-localization, acoustic sensor, room impulse response, self-localization.
\end{IEEEkeywords}

\section{introduction}
Indoor localization has become more important in recent years as numerous applications, e.g., public safety or location based services, rely on accurate indoor localization \cite{Liu}. As GPS signals are severely attenuated in typical indoor environment, a number of alternative technologies have been proposed for indoor localization, e.g. those using WiFi \cite{Chintalapudi,Lim,Martin}, UWB signal \cite{Stelios,Schroeder,Zhou}, LED light \cite{Jung,Vegni} 
, or some combination of the above.\par
These technologies inevitably require indoor geometry information. There are applications where the indoor room geometry may need to be acquired concurrently with localization. This is generally referred to as simultaneous localization and mapping (SLAM). We comment that the so-called WiFi-SLAM still requires indoor mapping information; SLAM refers to the training process that associates mapping information with the WiFi signature \cite{Huang}. There are also applications where mapping itself is the ultimate goal instead of self-localization \cite{Canclini,Kuang}.\par
For many applications where room shape reconstruction is required, acoustic based approach is arguably more suitable as rooms are often defined by dominant sound reflectors (walls). The distance measurements as measured through acoustic echoes contain rich information about the location of the measurement points as well as the room geometry. 
A key advantage of the acoustic based approach is that no pre-established infrastructure is needed; this is in sharp contrast with other approaches which inevitably require either deployment of anchor nodes \cite{Blanco,Djugash} or the availability of ambient WiFi signals as well as preliminary maps \cite{Huang}. This unique advantage has the potential to broaden the applications of indoor mapping and localization to systems where current technologies are either unsuitable or too expensive to implement.\par
The most prevalent acoustic based approach is to employ a single fixed loudspeaker and a microphone array, or equivalently, a fixed loudspeaker and a mobile microphone \cite{Dokmanic1,Tervo2,Antonacci,Dokmanic2,Ba,Crocco}. It was shown that both the room shape and the geometry of the microphone array (or the trajectory of the mobile microphone) can be estimated by first order echoes \cite{Dokmanic3}. Furthermore, bearing only SLAM can be achieved using a mobile microphone array \cite{Evers}.\par
The fact that a microphone array needs to be deployed leaves much to be desired: fully autonomous SLAM should require minimum deployment effort. Ideally, a single mobile device that moves around would autonomously reconstruct the room shape while tracking its own movement within the recovered room geometry. Indeed, room shape recovery using a single acoustic device has been addressed in the literature. It was established that any convex polygon can be reconstructed by the \emph{entire} set of both first and second order echoes collected using a fixed device with a collocated microphone and loudspeaker \cite{Dokmanic2}. However, experimental results, including that of our own, demonstrated that higher order acoustic echoes are often difficult to recover, thus the requirement of having the entire set of second order echoes makes such an approach impractical.\par
On the other hand, given only \emph{grouped} first order echoes, SLAM can be achieved for a large class of convex polygon other than parallelograms \cite{Fangrong}. This result was strengthened in \cite{Krekovic1} where it was established that parallelograms are the only convex polygons that are not recoverable via grouped first order echoes. Here ``-\emph{grouped}'' means correct labeling, i.e., the correspondence between collected echoes and walls is known.\par
This paper makes further progress in overcoming the shortcomings of the approaches in \cite{Fangrong,Krekovic1}. The reconstruction will again be based on first order echoes only but without the knowledge of echo labeling. To overcome the ambiguity associated with parallelograms, our approach leverages the ever expanding capability of various motion sensors embedded in latest smart phones, including accelerometer, magnetometer, and gyroscope. Those sensors are capable of measuring distance and direction information of a moving device \cite{Kang,Li,Roy}. However, existing results indicate that while distance measures have reasonable accuracy, direction measurement is often subject to large measurement error \cite{Zhang}. Thus our current approach only exploits the distance measurements and the key question to be addressed is how much additional information will be needed for acoustic SLAM to be able to recover all convex polygons.\par
The major contribution of the paper is to establish that with three non-collinear measurement points, SLAM can be achieved for all convex polygons using \emph{ungrouped} first order echoes provided that the distances between consecutive measurement points are known. Note that this additional information is much weaker than the knowledge of the complete geometry of the measurement - this is tantamount to knowing only two sides of a triangle which is inadequate to construct the triangle. An added advantage of this additional distance information is that it removes the need for grouped echoes, making the scheme much more widely applicable as it can accommodate a great deal of freedom in the movement of the device. Preliminary results have been reported in \cite{Tiexing}. The present work, in addition to expanding on technical details, contains several new results including a more detailed analysis on exactly what is the minimum amount of distance information that is needed for SLAM. Specifically, it is further established that with ungrouped echoes, a single distance measure does not suffice for parallelograms. Note the subtle but important difference with that of \cite{Fangrong,Krekovic1} in which grouped instead of ungrouped echoes are assumed.\par
The rest of the paper is organized as follows. Section II introduces the indoor propagation model of acoustic signals, image source model and existing results on $2$-D with a single device. Theoretical guarantee of successful SLAM given distances between consecutive measurement points is provided in Section III along with a practical algorithm that handles the presence of measurement noise and higher order/spurious peaks. Experiment results are provided in Section IV followed by conclusion in Section V.
\section{problem statement}
\subsection{Room Impulse Response Model}
Acoustic signal propagation from a loudspeaker to a microphone in a room can be described by the room impulse response (RIR), which includes both line-of-sight (LOS) and reflected components. If the microphone and loudspeaker are much closer to each other compared to the distance between the device and the walls, we say it is a co-located device. For a co-located device at the $j$th measurement point denoted by $O_{j}$, the RIR is, ignoring dispersion,
\begin{equation}\label{sys1}
h^{(j)}(t)=\sum_{i}\alpha_{i}^{(j)}\delta(t-\tau_{i}^{(j)}),
\end{equation}
where $\alpha_{i}^{(j)}$'s and $\tau_{i}^{(j)}$'s are path gains and delays from the transmitter to the receiver, respectively. Since higher order reflective paths typically have much weaker power, $h^{(j)}(t)$ can be approximated by the first $N_{j}+1$ components including LOS and $N_{j}$ reflective paths:
\begin{equation*}
h^{(j)}(t)\approx\sum_{i=0}^{N_{j}}\alpha_{i}^{(j)}\delta(t-\tau_{i}^{(j)}),
\end{equation*}
where we assume that the $N_{j}$ reflective paths contain all first order reflections and higher order ones that are detectable. Notice that for an arbitrary convex polygon, not every measurement point has first order echoes to all the walls. We refer to those measurement points can receive all first order echoes as \emph{feasible} measurement points.\par
Denote by $s(t)$ the emitted signal at the speaker. Then the received signal at the microphone for the $j$th measurement point is
\begin{equation}\label{received_signal}
r^{(j)}(t) = s(t) * h^{(j)}(t)+\omega(t),
\end{equation}
where $*$ denotes linear convolution and $\omega(t)$ is the additive noise. Ideally, the delays can be recovered from the received signal $r^{(j)}(t)$ if $s(t)$ behaves like a Dirac delta function \cite{Antonacci}. However, this requires a wideband acoustic signal along with a wideband acoustic channel, including that of the microphone receiver. A more practical alternative is to emit $s(t)$ with a desired auto-correlation function that is \emph{peaky} and then implement a correlator at the microphone:
\begin{equation}\label{correlation1}
m^{(j)}(t) =  r^{(j)}(t) * s(t).
\end{equation}
Thus, the first and dominant peak of $m^{(j)}(t)$ corresponds to the LOS components, while the remaining peaks correspond to reflective components. The time difference of arrival (TDOA) in reference to the LOS component can be used for estimating the delays of different reflective paths. A simple peak-detection method will be introduced in Section V.A, where the chirp signal is used for $s(t)$ because of its nice auto-correlation property.\par
Define a column vector
\begin{equation}\label{candd}
\tilde{\mathbf{r}}_{j}=\bigg\{\frac{(\tau_{i}^{(j)}-\tau_{0}^{(j)})c}{2}\bigg\}_{i=1}^{N_{j}},
\end{equation}
where $c$ is the speed of sound and $\tau_{i}^{(j)}$ is the arrival time of the $i$th path with $\tau_{0}^{(j)}$ corresponding to the LOS component. Then $\tilde{\mathbf{r}}_{j}$ contains all the distances between the device and the walls, along with some higher order terms.\par
\subsection{Image Source Model}
With the image source model \cite{Dokmanic1}, reflections within a constrained space can be viewed as free space LOS propagations from virtual sources to the receiver. Let the coordinate of $O_{j}$ be denoted by $\mathbf{o}_{j}$. As show in Fig.~$1$, the first order image source of $O_{j}$ with respect to the $i$th wall is
\begin{figure}[btp]
    \center
    \includegraphics[height=50mm,width=50mm]{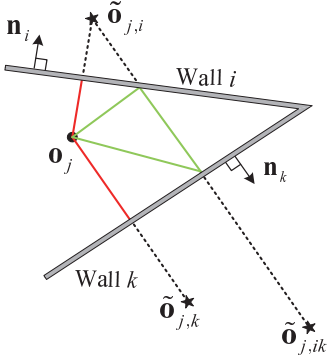}
    \caption{The image source model: $\tilde{\mathbf{o}}_{j,i}$ and $\tilde{\mathbf{o}}_{j,k}$ are first-order image sources with respect to the $i$th and $k$th wall and $\tilde{\mathbf{o}}_{j,ik}$ is the second-order image source with respect to the $i$th and $k$th wall in the stated order.}
    \label{Fig1}
\end{figure}
\begin{equation*}
\tilde{\mathbf{o}}_{j,i} = 2\langle \mathbf{p}_{i}-\mathbf{o}_{j},\mathbf{n}_{i}\rangle\mathbf{n}_{i} + \mathbf{o}_{j},
\end{equation*}
where $\mathbf{p}_{i}$ is any point on the $i$th wall, $\mathbf{n}_{i}$ is the outward norm vector of the $i$th wall and $\langle\mathbf{x},\mathbf{y}\rangle$ denotes the inner product between $\mathbf{x}$ and $\mathbf{y}$.
Let $r_{j,i}$ be the distance between $O_{j}$ and the $i$th wall, then
\begin{equation}\label{distance1}
r_{j,i}=\frac{1}{2}||\tilde{\mathbf{o}}_{j,i} - \mathbf{o}_{j}||_{2}.
\end{equation}
Moreover, the second order image source of $O_{j}$ with respect to the $i$th and the $k$th wall is
\begin{equation*}
\tilde{\mathbf{o}}_{j,ik}=2\langle\mathbf{p}_{k}-\tilde{\mathbf{o}}_{j,i},\mathbf{n}_{k}\rangle\mathbf{n}_{k}+\tilde{\mathbf{o}}_{j,i}.
\end{equation*}
Similarly, we denote by $r_{j,ik}$ the half distance between $\mathbf{o}_{j}$ and $\tilde{\mathbf{o}}_{j,ik}$. Following similar steps, higher order image sources can be represented by lower order image sources. Then all the elements of $\tilde{\mathbf{r}}_{j}$ can be represented by the real source and image sources. For the rest of the paper, the term \emph{echo} is used to refer to either the delay $\tau_{i}^{(j)}$ or the corresponding elements of $\tilde{\mathbf{r}}_{j}$ if no ambiguity occurs.
\subsection{Two Extreme Cases}
The most benign case is when the location of the measurement points are known, or equivalently, the distance between pairwise measurement points are given \cite{Krekovic}. In this case, only room shape reconstruction is of interest and the problem becomes trivial, at least in the noiseless case. It amounts to finding common tangent lines of circles centered at three non-collinear measurement points.\par
The other extreme is when the reconstruction is free of any geometry information of the measurement points. In this case, both room shape and self-localization are of interest. This was first investigated in \cite{Fangrong} where it was established that a large class of convex polygons can be reconstructed by \emph{grouped} first order echoes and, subsequently, the coordinates of measurement points can be also estimated. An important exception is parallelograms and it was shown in \cite{Fangrong} that unique reconstruction of parallelograms is impossible using first-order echoes alone. The result was later strengthened in \cite{Krekovic1} where it was proved that all convex polygons except parallelogram can be reconstructed subject to the usual rotation and reflection ambiguities.\par
\section{SLAM with known path lengths}
\subsection{SLAM with Two Path Lengths}
\begin{figure}[btp]
    \center
    \includegraphics[height=48mm,width=72mm]{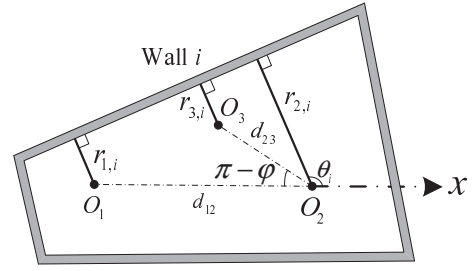}
    \caption{A mobile device is employed to achieve SLAM. The mobile device emits signal and collects echoes at $O_{1}$, $O_{2}$ and $O_{3}$ successively. The distances between the consecutive measurement points are $d_{12}$ and $d_{23}$.}
    \label{Fig2}
\end{figure}
Consider a convex planar $K$-polygon. As shown in Fig.$2$, a mobile device with co-located microphone and loudspeaker emits pulses and receives echoes at $\{O_{j}\}_{j=1}^{3}$. Without loss of generality, we assume that $O_{1}$ is the origin, $O_{2}$ lies on the $x$-axis, and $O_{3}$ lies above the $x$-axis. Let $\varphi=(\pi-\angle O_{1}O_{2}O_{3})\in(0,\pi)$ and the lengths of $O_{1}O_{2}$ and $O_{2}O_{3}$ be denoted by $d_{12}$ and $d_{23}$, respectively.\footnote{If $\varphi\in (0,2\pi)$, i.e. we do not have control of where to place $O_{3}$, then the reconstruction is subject to reflection ambiguity (c.f. Theorem III.3).} Suppose the mobile device is capable of measuring its path length when moving from one place to another, i.e. $d_{12}$ and $d_{23}$ are known. Our goal is to simultaneously determine the room shape and the coordinate of $O_{3}$ using first-order echoes.\par
From Fig.~$2$, it is straightforward to show that
\begin{equation}\label{rel1}
(r_{2,i}-r_{1,i})+d_{12}\cos\theta_{i}=0,
\end{equation}
\begin{equation}\label{d23-1}
d_{23}\cos(\theta_{i}-\varphi)+(r_{3,i}-r_{2,i})=0.
\end{equation}
\subsection{Ideal Case}
Let $\mathbf{r}_{j}=\{r_{j,i}\}_{i=1}^{K}$ be a column vector with its entries defined in \eqref{distance1}. We assume for now that, the one-to-one mapping $f_{j}:\tilde{\mathbf{r}}_{j}\mapsto \mathbf{r}_{j}$ is known for all $j$'s. In other words, $r_{j,i}$'s have been correctly chosen from $\tilde{\mathbf{r}}_{j}$ for $j=1,2,3$ and $i=1,\ldots,K$. For the rest of the paper, we say that the received echoes are \emph{grouped} if echoes are correctly labeled. The remaining problem is to determine the uniqueness of $\theta_{i}$'s and $\varphi$ given \eqref{rel1} and \eqref{d23-1}.\par
Define $\alpha_{ii^{\prime}}=-\frac{r_{2,i}-r_{1,i^{\prime}}}{d_{12}}$ and $\beta_{ii^{\prime}}=-\frac{r_{3,i^\prime{}}-r_{2,i}}{d_{23}}$. For simplicity we denote $\alpha_{ii}$ and $\beta_{ii}$ by $\alpha_{i}$ and $\beta_{i}$, respectively. Given grouped echoes and Eqs. \eqref{rel1} and \eqref{d23-1}, we have
\begin{equation}\label{d23-2}
\theta_{i}=\pm \arccos \alpha_{i} \quad \text{and}\quad \theta_{i}-\varphi=\pm \arccos\beta_{i},
\end{equation}
There are four possible sign combinations for a given $i$,
\begin{equation}\label{EC-1}
\theta_{i}=\arccos \alpha_{i} \quad \text{and}\quad \theta_{i}-\varphi=\arccos\beta_{i}
\end{equation}
\begin{equation}\label{EC-2}
\theta_{i}=\arccos \alpha_{i} \quad \text{and}\quad \theta_{i}-\varphi=-\arccos\beta_{i}
\end{equation}
\begin{equation}\label{EC-3}
\theta_{i}=-\arccos \alpha_{i} \quad \text{and}\quad \theta_{i}-\varphi=\arccos\beta_{i}
\end{equation}
\begin{equation}\label{EC-4}
\theta_{i}=-\arccos \alpha_{i} \quad \text{and}\quad \theta_{i}-\varphi=-\arccos\beta_{i}.
\end{equation}
\begin{lemma}\label{L1}
Suppose $O_{j}\:(j=1,2,3)$ are feasible and not collinear. Given grouped first order echoes, with probability $1$, there exist exactly two sign combinations such that \eqref{rel1} and \eqref{d23-1} hold simultaneously for all $i$ if $\varphi$ and the direction of both $\overrightarrow{O_{1}O_{2}}$ and $\overrightarrow{O_{2}O_{3}}$ are randomly chosen. The two possible sign combinations have opposite signs for $\varphi$ and all $\theta_{i}$'s and correspond to reflection of each other in terms of recovered room shapes.
\end{lemma}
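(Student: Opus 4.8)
The plan is to analyze the four sign combinations \eqref{EC-1}--\eqref{EC-4} for each wall $i$, and show that exactly two of them can hold consistently across \emph{all} $i$ simultaneously, generically in the angular data. First I would note that for a fixed $i$, each of \eqref{EC-1}--\eqref{EC-4} determines a candidate value of $\varphi$: subtracting the two equations gives $\varphi = \arccos\alpha_i - (\pm\arccos\beta_i)$ or $\varphi = -\arccos\alpha_i - (\pm\arccos\beta_i)$. The crucial observation is the sign-flip symmetry: replacing $(\theta_i,\varphi)$ by $(-\theta_i,-\varphi)$ maps \eqref{EC-1} to \eqref{EC-4} and \eqref{EC-2} to \eqref{EC-3}, and this corresponds geometrically to reflecting the whole configuration (measurement points and walls) about the $x$-axis. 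Hence the solution set is closed under this global sign flip, which already shows that consistent sign combinations come in pairs $\{$all-plus-type, all-minus-type$\}$ that are reflections of one another and have opposite $\varphi$ and opposite $\theta_i$. So at least two such combinations always exist (the true configuration and its mirror image), and the content of the lemma is that \emph{generically no third one exists}.

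Next I would set up the uniqueness argument. Suppose, toward a contradiction, that a third consistent sign assignment exists; up to the global flip we may assume it agrees with the true assignment on at least one wall, say wall $1$, but differs on some wall $i$. Agreement on wall $1$ pins down the same $\varphi$ (call it $\varphi_0$, the true value, since on wall $1$ the equations force $\varphi = \theta_1 - (\theta_1-\varphi_0) = \varphi_0$). Now on wall $i$ the alternative assignment must satisfy one of the remaining three sign patterns with this \emph{same} $\varphi_0$. Writing out $\theta_i = \pm\arccos\alpha_i$ and $\theta_i - \varphi_0 = \pm\arccos\beta_i$, and eliminating $\theta_i$, each non-true sign choice yields an equation of the form $\arccos\alpha_i \pm \arccos\beta_i = \pm\varphi_0$ that is \emph{not} the identity already satisfied by the true configuration. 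Using the addition formula for $\arccos$ (equivalently, taking cosines of both sides), each such equation becomes a nontrivial real-analytic constraint relating $\alpha_i,\beta_i,\varphi_0$ — e.g. $\alpha_i\cos\varphi_0 + \sqrt{1-\alpha_i^2}\sin\varphi_0 = \pm\beta_i$ or similar. Since $\alpha_i$ and $\beta_i$ are determined by $\theta_i$ and $\varphi_0$ through the \emph{true} relations \eqref{rel1}--\eqref{d23-1} (namely $\alpha_i = \cos\theta_i$, $\beta_i = \cos(\theta_i-\varphi_0)$), substituting these in turns each spurious constraint into an equation purely in $\theta_i$ and $\varphi_0$, such as $\cos(\theta_i+\varphi_0) = \cos(\theta_i - \varphi_0)$ or $\cos(2\theta_i-\varphi_0)=\cos\varphi_0$, each of which forces $\theta_i$ or $\varphi_0$ to lie in a finite set of values (measure-zero set) on $(0,\pi)$.

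Finally I would invoke the randomness hypothesis: $\varphi$ and the directions of $\overrightarrow{O_1O_2}$ and $\overrightarrow{O_2O_3}$ are drawn from a continuous distribution, hence so are the resulting wall angles $\theta_i$ (for feasible, non-collinear points these depend continuously and non-degenerately on the chosen directions and on the fixed polygon). The union over the finitely many walls $i$ and the finitely many spurious sign patterns of these exceptional sets is still measure zero, so with probability $1$ none of the spurious constraints holds, and the only consistent sign assignments are the true one and its global flip. I would close by remarking that the flip assignment reconstructs, for each wall, the line $\{\mathbf{x}: \langle \mathbf{x},(\cos(-\theta_i),\sin(-\theta_i))\rangle = r_{1,i}\}$, i.e. the mirror image across the $x$-axis of the true wall, so the two recovered polygons are reflections of each other. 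I expect the main obstacle to be the bookkeeping in step two: carefully enumerating which spurious sign patterns on wall $i$ are compatible with having fixed $\varphi_0$ from wall $1$, and verifying in each case that the resulting trigonometric identity is genuinely non-degenerate (not secretly always true) — this requires handling the branch ranges of $\arccos$ and the constraint $\theta_i,\varphi_0\in(0,\pi)$ with some care, and is where the "non-collinear" and "feasible" hypotheses must be used to rule out boundary coincidences.
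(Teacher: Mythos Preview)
Your overall strategy mirrors the paper's: exhibit the reflection symmetry $(\theta_i,\varphi)\mapsto(-\theta_i,-\varphi)$ that pairs \eqref{EC-1} with \eqref{EC-4} (and \eqref{EC-2} with \eqref{EC-3}), then show that any further consistent sign assignment forces a measure-zero trigonometric coincidence in the angular data. The case analysis you sketch on a single wall---reducing each spurious sign choice to an identity such as $\cos(\theta_i+\varphi_0)=\cos(\theta_i-\varphi_0)$---is essentially the same computation the paper carries out to obtain the degenerate conditions $\theta_1=0$, $\theta_1=\varphi$, or $\varphi=0$.

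There is, however, a real gap in your reduction. You assert that ``up to the global flip we may assume [the third solution] agrees with the true assignment on at least one wall,'' and then use that wall to pin $\varphi'=\varphi_0$. This is not justified. A spurious consistent assignment may select \eqref{EC-2} or \eqref{EC-3} on \emph{every} wall; then neither it nor its global flip ever uses \eqref{EC-1}, and its common value $\varphi'$ equals $\pm(\arccos\alpha_i+\arccos\beta_i)$, which need not be $\pm\varphi_0$. Your subsequent argument, which analyzes a wall $i$ under the constraint $\varphi'=\varphi_0$, simply does not apply to this case. The paper does address it---tersely---in the final paragraph of its proof, where it separately treats the situation in which ``a subset of \eqref{EC-2}--\eqref{EC-4} holds for $i$ and $i'$ simultaneously'' and argues that this again forces $\theta_i\in\{0,\varphi\}$ or $\varphi=0$. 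To close the gap you need the same dichotomy: either the spurious assignment uses \eqref{EC-1} or \eqref{EC-4} on some wall (forcing $\varphi'=\pm\varphi_0$ and reducing to your present argument), or it uses only \eqref{EC-2}/\eqref{EC-3} throughout, in which case constancy of $\arccos\alpha_i+\arccos\beta_i$ across $i$, combined with the ground-truth constancy of $\arccos\alpha_i-\arccos\beta_i$, forces all $\arccos\alpha_i$ (hence all $\cos\theta_i$) to coincide---impossible for distinct wall normals, but an argument you still need to supply.
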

\begin{proof}
Assume without loss of generality that the ground truth of the polygon is \eqref{EC-1} for all $i\in\{1,\ldots,K\}$. Note that \eqref{EC-1} implies that \eqref{EC-4} holds for $\theta_{i}^{\prime}=-\theta_{i}$ and $\varphi^{\prime}=-\varphi <0$ for all $i$, i.e., they correspond to reflections of each other.\par
Suppose multiple sign combinations hold for a wall. Without loss of generality, let $i=1$. From \eqref{EC-1} we have
\begin{equation}\label{d23-pf1}
\varphi=\arccos\alpha_{1}-\arccos\beta_{1}.
\end{equation}
Assume that one of the following equations also holds,
\begin{equation}\label{d23-pf2}
\varphi=-\arccos\alpha_{1}-\arccos\beta_{1},
\end{equation}
\begin{equation}\label{d23-pf3}
\varphi=\arccos\alpha_{1}+\arccos\beta_{1},
\end{equation}
\begin{equation}\label{d23-pf4}
\varphi=-\arccos\alpha_{1}+\arccos\beta_{1}.
\end{equation}
Then we have the following three cases
\begin{enumerate}
    \item If \eqref{d23-pf1} and \eqref{d23-pf2} hold, we must have $\theta_{1}=0$ which implies that $O_{1}O_{2}$ is perpendicular to the first wall, and $\varphi=-\arccos\beta_{1}$.
    \item If \eqref{d23-pf1} and \eqref{d23-pf3} hold, we must have $\arccos\beta_{1}=0$, which implies that $O_{2}O_{3}$ is perpendicular to the first wall.
    \item If \eqref{d23-pf1} and \eqref{d23-pf4} hold, we must have $\varphi=0$, which contradicts the assumption that $O_{1}$, $O_{2}$ and $O_{3}$ are not collinear.
\end{enumerate}
Given that the three measurement points are randomly chosen, and, subsequently, $\varphi$, $\overrightarrow{O_{1}O_{2}}$ and $\overrightarrow{O_{2}O_{3}}$ are random, the first two cases do not occur with probability one.\par
If a subset of \eqref{EC-2}-\eqref{EC-4} holds for $i$ and $i^{\prime}$ simultaneously, then we must have $(\theta_{i},\theta_{i^{\prime}})\in\{\theta_{i}=0,\theta_{i}=\varphi,\varphi=0\}\times\{\theta_{i^{\prime}}=0,\theta_{i^{\prime}}=\varphi,\varphi=0\}$, which again, do not occur due to randomly chosen measurement points. Similarly, it can be shown that for more than $2$ walls, \eqref{EC-1} would imply none of \eqref{EC-2}-\eqref{EC-4} holds for all walls.
\end{proof}
\subsection{Echo Labeling}
Since echoes may arrive in different orders at different $O_{j}$'s and $\tilde{\mathbf{r}}_{j}$ contains higher order echoes if $N_{j}>K$, $f_{j}$ is usually unknown. We say the received echoes are \emph{ungrouped} if $f_{j}$ is unknown for some $j$. Thus given $\tilde{\mathbf{r}}_j$, our task is to first determine the mapping $f_{j}$, i.e., label the echoes, followed by estimation of $\theta_{i}$'s and $\varphi$. \par
\begin{lemma}\label{L2}
With ungrouped echoes, any mapping $f_{j}^{\prime}$ that differs from the correct mapping $f_{j}$ will result, with probability $1$, the following two possible cases
\begin{enumerate}
\item there exists no solution to \eqref{rel1} and \eqref{d23-1} given no parallel edges, or
\item the reconstructed room shape has larger dimension with respect to parallel edges.
\end{enumerate}
\end{lemma}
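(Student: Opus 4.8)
The plan is to enumerate the ways a labeling can be wrong and, for each, decide whether the induced system \eqref{rel1}--\eqref{d23-1} is solvable. The starting point is the geometric reading of \eqref{rel1}--\eqref{d23-1} coming from Fig.~2: a triple of scalars $(v_1,v_2,v_3)$ assigned to a ``wall slot'' is consistent with \eqref{rel1} for some $\theta'$ and with \eqref{d23-1} for that same $\theta'$ and a given $\varphi'$ exactly when $(v_1,v_2,v_3)$ is the triple of distances from $O_1$, $O_2$, $O_3$ to one and the same line. Hence a labeling $f'=(f'_1,f'_2,f'_3)$ can ``succeed'' only if there is a single $\varphi'\in(0,\pi)$ for which every slot's triple is realizable by a line and the $K$ resulting lines bound a convex $K$-gon containing the three points; by Lemma~\ref{L1}, applied to the slots $f'$ labels correctly, this generically forces $\varphi'=\pm\varphi$ as soon as two mutually non-parallel walls are labeled correctly. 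A global relabeling of the walls changes neither the reconstructed polygon nor solvability, so I may assume $f'$ differs from $f$ nontrivially: at some slot $i_0$ and some point $j_0$ it reads a value $v$ of $\tilde{\mathbf r}_{j_0}$ that is not the first-order echo of the wall $f$ puts there, so $v$ is either the first-order echo of a \emph{different} wall (a \emph{mismatch}) or a \emph{higher-order} echo.

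Next I would dispose of the mismatch and the higher-order cases when the room has no parallel edges. For a mismatch, the two correctly-read entries of the slot's triple pin its line to one of at most four common tangents of two circles centred at two of the $O_j$; the value the remaining entry would then have to take lies in a fixed finite set, whereas the value $f'$ actually reads there is a nonconstant real-analytic function of the (randomly placed) points, so generically it misses that set --- a single witness configuration suffices to see the function is not identically one of those finitely many values. For a higher-order echo I would use the image-source description: a composition of an even number of reflections is a rotation or a translation, so an even-order echo equals either a scaled distance from $O_j$ to a fixed point --- not an affine function of $O_j$, hence generically not realizable by a line --- or, if the composite is a translation (which happens when the reflecting walls are parallel), a constant in $j$, which by \eqref{rel1}--\eqref{d23-1} forces $\cos\theta'_{i_0}=\cos(\theta'_{i_0}-\varphi')=0$ and thus $\varphi'\in\{0,\pi\}$, contradicting non-collinearity. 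An odd-order echo is a distance to a reflected line $m$, but without parallel edges every such $m$ runs through an intersection of two non-parallel walls and cuts the interior, so $\{m\}$ together with the remaining walls cannot bound a convex polygon enclosing the $O_j$. Collecting these, outside a measure-zero set no nontrivial $f'$ admits a solution when there are no parallel edges, which is case~(1).

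When the polygon has a parallel pair $a\parallel b$ at separation $w$, a surviving family of wrong labelings does appear, and I would identify it explicitly. The reflection sequence $\ell_a\!\to\!\ell_b\!\to\!\ell_a$ is the reflection over the line $m$ parallel to $\ell_a,\ell_b$ but lying at distance $w$ on the far side of $\ell_a$ from the interior, so the associated third-order echo equals $\mathrm{dist}(O_j,m)=r_{j,a}+w$ at every $j$; substituting it into the wall-$a$ slot keeps $\theta'_{i_0}=\theta_a$ and leaves $\varphi'=\varphi$, hence is compatible with all the correctly-labeled walls. For those configurations in which $\{m\}$ together with the walls other than $a$ still bound a convex $K$-gon containing the $O_j$, this is a genuine alternative reconstruction; and since $m$ lies strictly beyond $\ell_a$, the recovered separation of the parallel pair is $2w>w$, i.e.\ the reconstruction is the true room dilated orthogonally to the parallel edges --- case~(2) (symmetrically with $b$ in place of $a$, and with higher odd-order echoes along the pair). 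It then remains to check these are the only survivors: a fold or swap within a parallel pair, or across the three points, replaces some entries $r_{j,a}$ by $r_{j,b}=w-r_{j,a}$ and, once $\varphi'$ is fixed by the untouched walls, over-determines $\theta'_{i_0}$ by exactly one scalar equation that holds only on a null set, while mismatches between non-parallel walls were already handled above.

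The real obstacle is this last classification step in the parallel case: proving that \emph{every} surviving wrong labeling is a reflected-line substitution along a parallel pair, and that each such substitution strictly \emph{enlarges} the parallel-edge separation rather than preserving or shrinking it. This requires carefully tracking how the substituted echo's numerical value interacts simultaneously with the $\varphi'$ forced by the correctly-labeled walls --- with the wrinkle that for a parallelogram, where all walls come in parallel pairs, the correctly-labeled walls need not pin $\varphi'$ uniquely --- and with the convexity and enclosure constraints, together with a real-analyticity argument to sweep the near-misses into a measure-zero set. By comparison, the no-parallel case is essentially the finite tangent-line count plus the non-affineness (or constancy) of even-order echoes as functions of the measurement position.
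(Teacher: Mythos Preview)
Your approach differs substantially from the paper's. The paper works algebraically with $\alpha_{ii'}=-(r_{2,i}-r_{1,i'})/d_{12}$ and $\beta_{ii'}=-(r_{3,i'}-r_{2,i})/d_{23}$: it fixes $K=4$, writes out one particular wrong labeling for a parallelogram, checks that the resulting system collapses to two equations in $\varphi$ with independent data (hence generically inconsistent), asserts in one sentence that all other wrong first-order labelings behave likewise, and then verifies directly that $r_{j,131}-r_{j',131}=r_{j,1}-r_{j',1}$ for a parallel pair to obtain case~(2). Your route is more structural: you recast slot-consistency as ``the triple is the set of distances from $O_1,O_2,O_3$ to a single line'', classify image-source isometries by parity, and invoke real-analyticity. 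This gives a cleaner mechanism --- non-affineness of the echo as a function of the measurement location --- that in principle handles arbitrary $K$ and echo order uniformly, whereas the paper's argument is really a worked example extrapolated by assertion.

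Two points in your outline need tightening. First, ``an odd-order echo is a distance to a reflected line $m$'' is only true when the composite is a pure reflection (e.g.\ $R_aR_bR_a$ via conjugation); a generic odd composition is a glide reflection, for which the echo $\tfrac12\|O_j-\tilde O_j\|=\sqrt{\mathrm{dist}(O_j,m)^2+\|v\|^2/4}$ is non-affine in $O_j$ and belongs with your rotation/even-order case, not the line case. Second, your mismatch paragraph (``the two correctly-read entries pin the line\ldots'') treats only a single wrong entry per slot; two- and three-way mismatches need their own bookkeeping, though the same finite-tangent-set plus analyticity idea closes them. The paper is no more careful here --- it dispatches all remaining labelings with a one-line independence claim --- so the obstacle you flag at the end (full classification in the parallel case, especially for parallelograms where no untouched non-parallel pair pins $\varphi'$) is precisely where both arguments are thinnest.
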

\begin{proof}
We illustrate the proof by considering the case $K=4$. The result can be easily extended to $K=3$ and $K>4$.\par
Suppose again that the ground truth is \eqref{EC-1} for all $i$. We first consider parallelograms and exclude odd higher order echoes resulting from a pair of parallel walls. The distances between $O_{j}\:(j=1,2,3)$ and the four walls satisfy
\begin{equation}\label{mic-pf-p4}
r_{1,1}+r_{1,2}=r_{2,1}+r_{2,2}=r_{3,1}+r_{3,2}=a,
\end{equation}
\begin{equation}\label{mic-pf-p5}
r_{1,3}+r_{1,4}=r_{2,3}+r_{2,4}=r_{3,3}+r_{3,4}=b.
\end{equation}
One can see that for some $f_{j}^{\prime}$'s, pairs of $\{\alpha_{ii^{\prime}},\beta_{ii^{\prime}}\}\: (i,i^{\prime}\in\{1,2,3,4\})$ are related to each other. Consider for example the $f_{j}^{\prime}$'s resulting in $\{\alpha_{12},\alpha_{21}, \alpha_{34}, \alpha_{43}\}$ and $\{\beta_{12},\beta_{21}, \beta_{34}, \beta_{43}\}$. Since $\alpha_{12}+\alpha_{21}=0$, $\alpha_{34}+\alpha_{43}=0$, $\beta_{12}+\beta_{21}=0$ and $\beta_{34}+\beta_{43}=0$, we have
\begin{equation*}
\arccos(\alpha_{21})=\pi\pm\arccos(\alpha_{12}),
\end{equation*}
\begin{equation*}
\arccos(\alpha_{43})=\pi\pm\arccos(\alpha_{34}),
\end{equation*}
\begin{equation*}
\arccos(\beta_{21})=\pi\pm\arccos(\beta_{12}),
\end{equation*}
\begin{equation*}
\arccos(\beta_{43})=\pi\pm\arccos(\beta_{34}).
\end{equation*}
Thus \eqref{d23-2} reduces to two equations
\begin{equation*}
\varphi=\pm \arccos(\alpha_{12})\pm\arccos(\beta_{12}),
\end{equation*}
\begin{equation*}
\varphi=\pm \arccos(\alpha_{34})\pm\arccos(\beta_{34}).
\end{equation*}
With probability $1$, these two equations do not hold simultaneously as $\alpha_{12}$, $\beta_{12}$ are independent of $\alpha_{34}$, $\beta_{34}$ due to randomly chosen measurement points. Other $f_{j}^{\prime}(\neq f_{j})$'s always have at least two equations with independent choice of $\alpha$ and $\beta$. Hence no solution can be found for those instances.\par
Suppose $f_{j}^{\prime}$'s are chosen such that we have $\alpha_{ii^{\prime}}$ and $\beta_{ii^{\prime\prime}}$ ($i\neq i^{\prime}, \: i\neq i^{\prime\prime}$). For rooms with no more than one pair of parallel walls, only echoes chosen according to $f_{j}$'s satisfy \eqref{EC-1} for all $i$. This is because for those rooms, at least one of \eqref{mic-pf-p4} and \eqref{mic-pf-p5} does not hold. Thus some $\alpha_{ii^{\prime}}$'s and $\beta_{ii^{\prime\prime}}$'s are not related since $r_{1i^{\prime}}$, $r_{2i}$ and $r_{3i^{\prime\prime}}$ are randomly chosen from $\tilde{\mathbf{r}}_{1}$, $\tilde{\mathbf{r}}_{2}$ and $\tilde{\mathbf{r}}_{3}$, respectively.\par
Given parallel edges, however, higher order echoes may also satisfy \eqref{rel1} and \eqref{d23-1}. For instance, as shown in Fig.~$3$, suppose that walls $1$ and $3$ are parallel. Then it is easy to verify that
\begin{equation*}
r_{j,131} - r_{j^{\prime},131} = r_{j,1} - r_{j^{\prime},1},
\end{equation*}
\begin{equation*}
r_{j,313} - r_{j^{\prime},313} = r_{j,3} - r_{j^{\prime},3},
\end{equation*}
where $j\neq j^{\prime}$. Hence, \eqref{rel1} and \eqref{d23-1} provide the same $\cos\theta_{1}$, $\cos\theta_{3}$, $\cos(\theta_{1}-\varphi)$ and $\cos(\theta_{3}-\varphi)$ if $r_{j,1}$ and $r_{j,3}$ are replaced by $r_{j,131}$ and $r_{j,313}$, respectively. By Lemma III.1, the third order echoes resulting from a pair of parallel edges lead to a larger room with the same norm vectors. Exactly the same argument applies to odd higher order echoes from a pair of parallel edges. Therefore, Lemma III.2 is proved.
\end{proof}
\emph{Remark} 1: The ambiguities resulting from parallel edges can be easily eliminated if we always choose SLAM result with the smallest room size.\par
Given Lemma III.1 and Lemma III.2, we have the following result on the identifiability of any convex polygonal room by using only first order echoes.
\begin{theorem}\label{L3}
With probability $1$, SLAM can be achieved subject to reflection ambiguity given any convex planar $K$-polygon, by using the first order echoes received at three random points in the feasible region, with known $d_{12}$ and $d_{23}$ and unknown $\varphi\in(0,2\pi)$.
\end{theorem}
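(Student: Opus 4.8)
The plan is to obtain Theorem~\ref{L3} by assembling Lemma~\ref{L1}, Lemma~\ref{L2}, and the selection rule of Remark~1, after first checking that the genericity hypotheses of those lemmas are met almost surely. I would begin by recalling the normalization of Section~III (place $O_1$ at the origin, $O_2$ on the positive $x$-axis, $O_3$ off the axis) and observing that drawing the three measurement points at random from the feasible region guarantees, with probability one, both that each point receives all $K$ first-order echoes -- so every $\tilde{\mathbf r}_j$ contains the full vector $\{r_{j,i}\}_{i=1}^{K}$ defined in \eqref{distance1} -- and that $O_1,O_2,O_3$ are not collinear, which is precisely what Lemmas~\ref{L1} and~\ref{L2} require. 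The observables are the \emph{unlabeled} sets $\tilde{\mathbf r}_j$ together with $d_{12},d_{23}$; the unknowns are the labelings $f_j$, the wall-normal angles $\theta_i$, and $\varphi$.

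The second step is to resolve the labeling. For every candidate triple $(f_1',f_2',f_3')$ one forms the associated $\alpha$'s and $\beta$'s, writes \eqref{rel1}--\eqref{d23-1} in the $\arccos$ form \eqref{d23-2}, and tests whether a single $\varphi$ together with angles $\theta_i$ can satisfy all $K$ pairs of equations simultaneously. By Lemma~\ref{L2}, with probability one every $(f_1',f_2',f_3')\neq(f_1,f_2,f_3)$ either admits no consistent solution (when the wrong map mixes distances of non-parallel walls, so that two of the derived relations $\varphi=\pm\arccos\alpha\pm\arccos\beta$ involve independently drawn distances and cannot coincide) or yields a reconstruction that is strictly larger than the truth (the case where the inconsistency is masked by a pair of parallel walls, including odd higher-order echoes reflecting between such a pair). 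Hence, picking among all consistent reconstructions the one of smallest room size, as in Remark~1, returns either the correct labeling or one producing the same polygon; in either case the true room shape is identified.

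With a correct labeling fixed, I would then invoke Lemma~\ref{L1}: generically exactly two sign assignments in \eqref{d23-2} are consistent across all $K$ walls, and they are related by $(\{\theta_i\},\varphi)\mapsto(\{-\theta_i\},-\varphi)$, i.e.\ a global reflection about the $x$-axis. Choosing either branch, wall $i$ is the line whose outward normal makes angle $\theta_i$ with $\overrightarrow{O_1O_2}$ and whose distance from $O_1$ is $r_{1,i}$; sorting the $\theta_i$ gives the cyclic order, intersecting the $K$ lines in that order reconstructs the polygon, and $\mathbf o_3$ follows from $d_{12},d_{23},\varphi$ by elementary trigonometry. Since $\varphi$ is only known to lie in $(0,2\pi)$, i.e.\ we cannot force $O_3$ to lie above the $x$-axis, the two reflected solutions are indistinguishable, which is exactly the asserted reflection ambiguity (and it collapses to a unique reconstruction when $\varphi\in(0,\pi)$, matching the footnote). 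Collecting these three steps proves the theorem.

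The main obstacle sits entirely in the use of Lemma~\ref{L2}: a careful write-up must confirm that its case analysis is exhaustive over \emph{all} wrong labelings, including those that pull in higher-order entries of $\tilde{\mathbf r}_j$ when $N_j>K$, and that each such labeling really does force either an inconsistent system or a strictly larger polygon rather than one of equal size. A secondary point worth making explicit is that ``smallest room'' in Remark~1 is well defined here, since only finitely many candidate reconstructions survive and they can be compared by a fixed functional such as area or diameter, so the selection rule is algorithmically meaningful and not merely an existence statement.
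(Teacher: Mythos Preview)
Your proposal is correct and matches the paper's approach exactly: the paper does not give a standalone proof of Theorem~\ref{L3} but simply states that it follows from Lemma~\ref{L1}, Lemma~\ref{L2}, and the smallest-room selection of Remark~1, which is precisely the assembly you describe. Your write-up is in fact more detailed than the paper's, and your closing caveats about the exhaustiveness of Lemma~\ref{L2}'s case analysis and the well-definedness of ``smallest room'' are apt observations that the paper leaves implicit.
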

\begin{figure}[btp]
    \center
    \includegraphics[height=45mm,width=75mm]{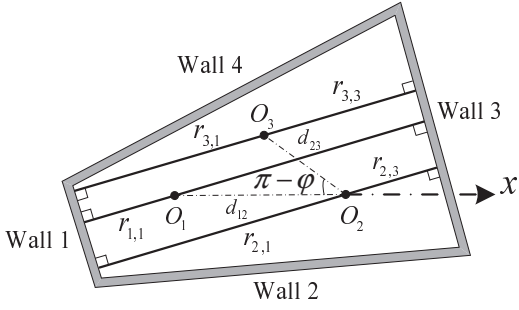}
    \caption{A room with a pair of parallel edges. Here wall $1$ and $3$ are parallel.}
    \label{Fig3}
\end{figure}
\emph{Remark} 2: Both the room shape and the coordinate of $O_{3}$ are subject to reflection ambiguity for $\varphi\in(0,2\pi)$. If, however, we can limit $\varphi\in(0,\pi)$, SLAM will be free of such ambiguity.\par
\emph{Remark} 3: In reality, it is inevitable to collect reflections from the ceiling and the floor. However, by Theorem III.3, if distances corresponding to these echoes are included, no polygon can be recovered provided that the trajectory of the device lies in a plane that is perpendicular to the walls.\par
\subsection{A Practical Algorithm}
In a real acoustic system, $m^{(j)}(t)$'s in \eqref{correlation1} are inevitably corrupted by measurement noise leading to corrupted measurement of $\tilde{\mathbf{r}}_{j}$. Let the corrupted version of $\tilde{\mathbf{r}}_{j}$ be denoted by $\hat{\mathbf{r}}_{j}$. Two issues arise. First, given $f_{j}$'s, $\varphi$ obtained by \eqref{d23-2} for different $i$'s are not necessarily identical. The second issue is the possibility that the computed cosine values in \eqref{rel1} may have absolute value exceeding $1$. For the former, we propose a heuristic scheme of choosing the echo and sign combination that yield the smallest variance of the estimated $\varphi$'s across different $i$'s. Notice that in the noiseless case with perfect echo measurements, the variance of the estimated $\varphi$'s across different $i$'s is $0$ if the correct echo and sign combination is selected while all others will have non-zero (potentially large variance). For the latter, define a feasible $\cos\theta_{i}$ as
\begin{equation*}
\cos\theta_{i} = \begin{cases}
1, & \text{if}\ 1\leq -\frac{\hat{r}_{2,i}-\hat{r}_{1,i}}{d_{12}} < 1+\epsilon \\
-\frac{\hat{r}_{2,i}-\hat{r}_{1,i}}{d_{12}}, & \text{if}\ -1<-\frac{\hat{r}_{2,i}-\hat{r}_{1,i}}{d_{12}}<1 \\
-1, & \text{if}\ -1-\epsilon < -\frac{\hat{r}_{2,i}-\hat{r}_{1,i}}{d_{12}} \leq -1
\end{cases},
\end{equation*}
where $\epsilon > 0$ is a tuning parameter determined by the noise level. Feasible $\cos(\theta_{i}-\varphi)$ can be similarly defined. The echo combination is said to be infeasible if either $|\frac{\hat{r}_{2,i}-\hat{r}_{1,i}}{d_{12}}|>1+\epsilon$ or $|\frac{\hat{r}_{3,i}-\hat{r}_{2,i}}{d_{23}}|>1+\epsilon$. Only those feasible $\theta_{i}$'s and $\varphi$ will be used in computing the variance of the estimated $\varphi$.\par
As the number of walls for the room is not known in prior, the proposed algorithm needs to first reconstruct some room shapes with $K=3,\ldots,N$ walls. Then the desired room shape is the feasible one with the largest number of walls. In order to reconstruct a room shape with $K$ walls, the number of echo combinations that need to be exhausted is
\begin{equation*}
\binom{N_{1}}{K}\binom{N_{2}}{K}\binom{N_{3}}{K}(K!)^2.
\end{equation*}
For simplicity assume that $N=N_{1}=N_{2}=N_{3}$. Let $V_{th}$ be the threshold of the variance. The corresponding algorithm is summarized as Algorithm 1.
\begin{algorithm}
\caption{Reconstruct convex polygon given distances between consecutive measurement points}
\label{CP-recon}
\begin{algorithmic}[1]
\State {Set $K=3$ and $V_{th}$.}
\If {$K\leq N$}
\State {Set $V_{K}=\inf$ and the stored polygon with $K$ walls be empty.}
\For {$n=1:\big(\binom{N}{K}\big)^{3}(K!)^2$}
\State {Based on the $n$th echo combination, choose $K$ elements from $\hat{\mathbf{r}}_{1}$, $\hat{\mathbf{r}}_{2}$, $\hat{\mathbf{r}}_{3}$, respectively.}
\State {Compute $\cos\theta_{i}$'s and $\cos(\theta_{i}-\varphi)$ for $i=1,\ldots,K$.}
\If {$\cos\theta_{i}$'s and $\cos(\theta_{i}-\varphi)$ are feasible}
\State {Compute $\text{Var}[\varphi]$ for different sign combinations and keep the one with the smallest $\text{Var}[\varphi]$.}
\If {$\text{Var}[\varphi]< V_{K}$ and the room shape does not fully cover the stored one with $K$ walls}
\State {Keep the echo and sign combination and set $V_{K} = \text{Var}[\varphi]$ for $K$.}
\EndIf
\EndIf
\EndFor
\State {$K=K+1$.}
\Else
\State{Keep the SLAM results the largest $K$ such that $V_{K} < V_{th}$.}
\EndIf
\end{algorithmic}
\end{algorithm}
\subsection{SLAM with One Path Length}
Now that we have established that two distances between three consecutive measurement points are sufficient to overcome the drawback of using first order echoes alone, a natural question is what would be the least amount of information that is required to achieve SLAM for any convex polygons. Specifically we examine the case where only one distance between a pair of measurement points is known. We show that for a parallelogram, there exist multiple rooms satisfying \eqref{rel1} and \eqref{d12-1} in this case, thus the answer is negative, i.e. a single distance measurement is insufficient for SLAM with ungrouped first order echoes.\par
Without loss of generality, assume $d_{12}$ is known but $d_{23}$ is not. As shown in Fig.~$4$, let $O_{1}$ be the origin, $O_{2}$ be on the x-axis and $O_{3}(x_{3},y_{3})\:(y_{3}\neq 0)$ is unknown. We also assume that the direction of $\overrightarrow{O_{1}O_{2}}$ with respect to the desired room is unknown. By geometry, we have \eqref{rel1} and
\begin{equation}\label{d12-1}
(r_{3,i}-r_{1,i})+x_{3}\cos\theta_{i}+y_{3}\sin\theta_{i}=0.
\end{equation}
Eq. \eqref{d12-1} can also be rewritten in a matrix form
\begin{equation}\label{Matr1}
\mathbf{A}[x_{3},y_{3}]^{T}=\mathbf{b},
\end{equation}
where
\begin{equation*}
\mathbf{A} =\begin{bmatrix}
\cos\theta_{1} & \sin\theta_{1} \\
\vdots & \vdots \\
\cos\theta_{K} & \sin\theta_{K}
\end{bmatrix},
\end{equation*}
and
\begin{equation*}
\mathbf{b}=[-(r_{3,1}-r_{1,1}),\ldots,-(r_{3,K}-r_{1,K})]^{T}.
\end{equation*}
\begin{figure}[btp]
    \center
    \includegraphics[height=48mm,width=72mm]{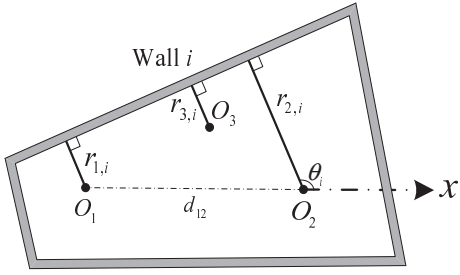}
    \caption{A mobile device is employed to measure the geometry of a room. The mobile device collects echoes at $O_{1}$, $O_{2}$ and $O_{3}$ successively. Only the distances between $O_{1}$ and $O_{2}$ ($d_{12}$) is known.}
    \label{Fig3}
\end{figure}

The ground truth of a parallelogram is assumed to be
\begin{equation*}
\mathbf{A}=\begin{bmatrix}
\cos\theta_{1} & \sin\theta_{1} \\
\cos\theta_{2} & \sin\theta_{2} \\
\cos\theta_{3} & \sin\theta_{3} \\
\cos\theta_{4} & \sin\theta_{4}
\end{bmatrix}\:\text{and}\:\;
\mathbf{b}=\begin{bmatrix}
-(r_{3,1}-r_{1,1}) \\
-(r_{3,2}-r_{1,2}) \\
-(r_{3,3}-r_{1,3}) \\
-(r_{3,4}-r_{1,4})
\end{bmatrix},
\end{equation*}
where
\begin{equation*}
r_{1,1}+r_{1,3}=r_{2,1}+r_{2,3}=r_{3,1}+r_{3,3},
\end{equation*}
\begin{equation*}
r_{1,2}+r_{1,4}=r_{2,2}+r_{2,4}=r_{3,2}+r_{3,4}.
\end{equation*}
Let
\begin{equation*}
\mathbf{A}^{\prime}=\begin{bmatrix}
\cos\theta_{13} & \sin\theta_{13} \\
\cos\theta_{24} & \sin\theta_{24} \\
\cos\theta_{31} & \sin\theta_{31} \\
\cos\theta_{42} & \sin\theta_{42}
\end{bmatrix}
\mathbf{b}^{\prime}=\begin{bmatrix}
-(r_{3,1}-r_{1,3}) \\
-(r_{3,2}-r_{1,4}) \\
-(r_{3,3}-r_{1,1}) \\
-(r_{3,4}-r_{1,2})
\end{bmatrix}.
\end{equation*}
Then
\begin{equation*}
\cos\theta_{13}+\cos\theta_{31}=0\quad\text{and}\quad\cos\theta_{24}+\cos\theta_{42}=0.
\end{equation*}
Moreover, since $\sin\theta=\pm\sqrt{1-\cos^{2}\theta}$,
\begin{equation*}
\sin\theta_{13}+\sin\theta_{31}=0\quad\text{and}\quad\sin\theta_{24}+\sin\theta_{42}=0
\end{equation*}
can hold if we manipulate the sign of square root properly.\par
Then $\text{rank}(\mathbf{A}^{\prime})=\text{rank}([\mathbf{A}^{\prime},\mathbf{b}^{\prime}])=2$. Thus a room shape and the coordinate of $O_{3}$ different from the ground truth and its reflection also satisfy both \eqref{rel1} and \eqref{d12-1}.\par

\section{Experimental results}
\subsection{Experiment Setup}
We describe in the following some preliminary experimental results. Enormous challenges exist to conduct a truly autonomous SLAM. Chief among them are: the search space (number of combinations) is extremely large - using for example, some modest numbers, e.g. $K=4$ and $N_{1}=N_{2}=N_{3}=8$, the number of echo combinations exceeds $10^{7}$, combining with the sign combinations the search space is in the billions; the measurement of motion sensors is still subject to large errors and some robustification of the reconstruction algorithm will need to be investigated if the true motion sensor measurements are used. The purpose of the experimental design is thus to demonstrate the feasibility of the proposed scheme in an idealized situation with a certain degree of human intervention to alleviate the above challenges.\par
We use a laptop as a microphone and a HTC M8 phone as our loudspeaker. As the loudspeaker of the cell phone is not omnidirectional and is power limited, we place the speaker of the cell phone towards each wall to ensure the corresponding first order echo is strong enough. Note that the microphone will record both first order echoes and some higher order ones. A chirp signal linearly sweeping from $30$Hz to $8$kHz is emitted by the cell phone. The sample rate at the receiver is $f_{s}=96$kHz. It has been shown in \cite{Farina,Stan} that if the input chirp signal is correlated with its windowed version, the output may resemble a delta function, which is desirable for better delay resolution.
\begin{figure}[!ht]
    \subfloat[Transmitted signal convolves with itself\label{subfig-4:dummy}]{%
      \includegraphics[width=0.23\textwidth]{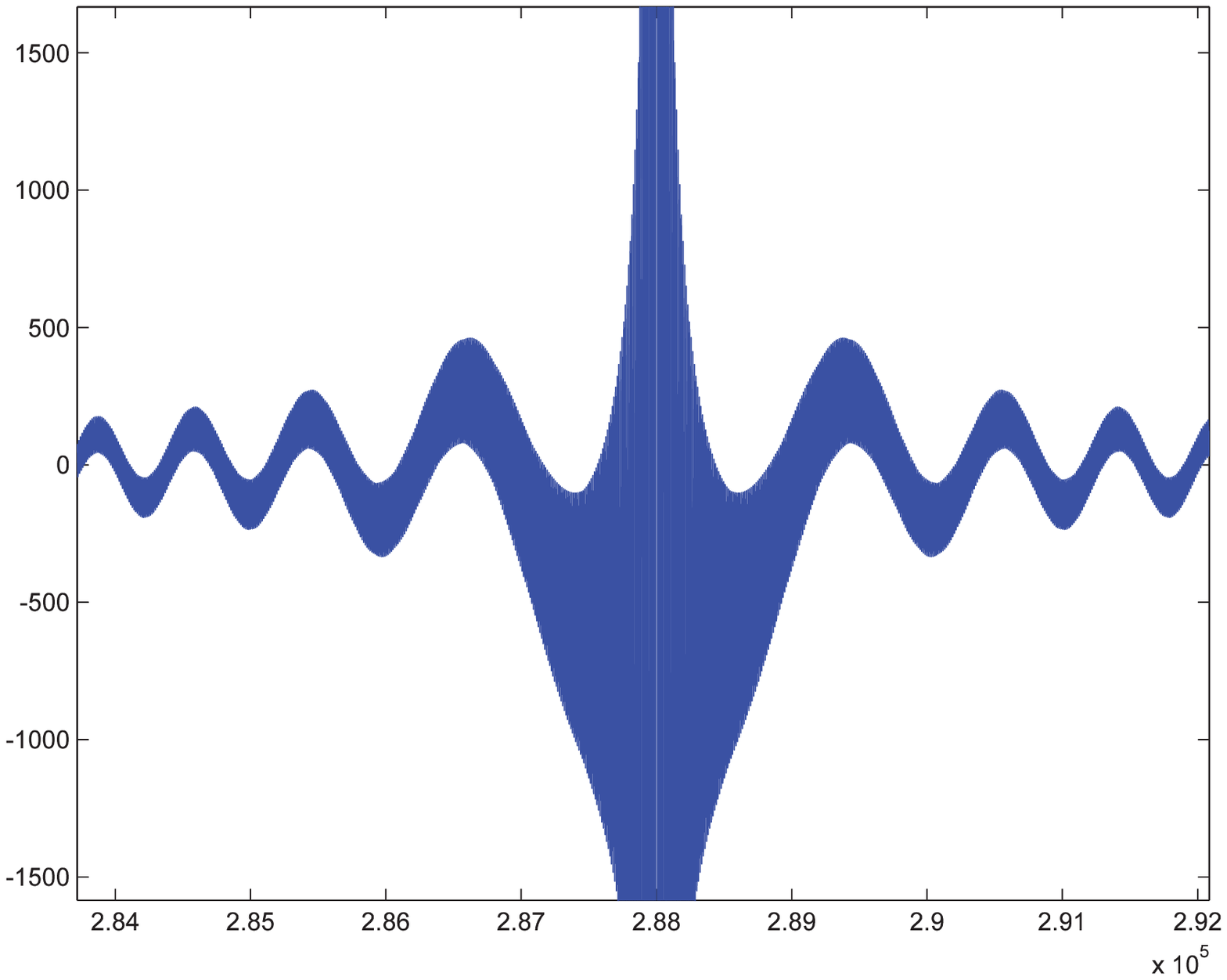}
    }
    \hfill
    \subfloat[Transmitted signal convolves with its windowed version\label{subfig-5:dummy}]{%
      \includegraphics[width=0.23\textwidth]{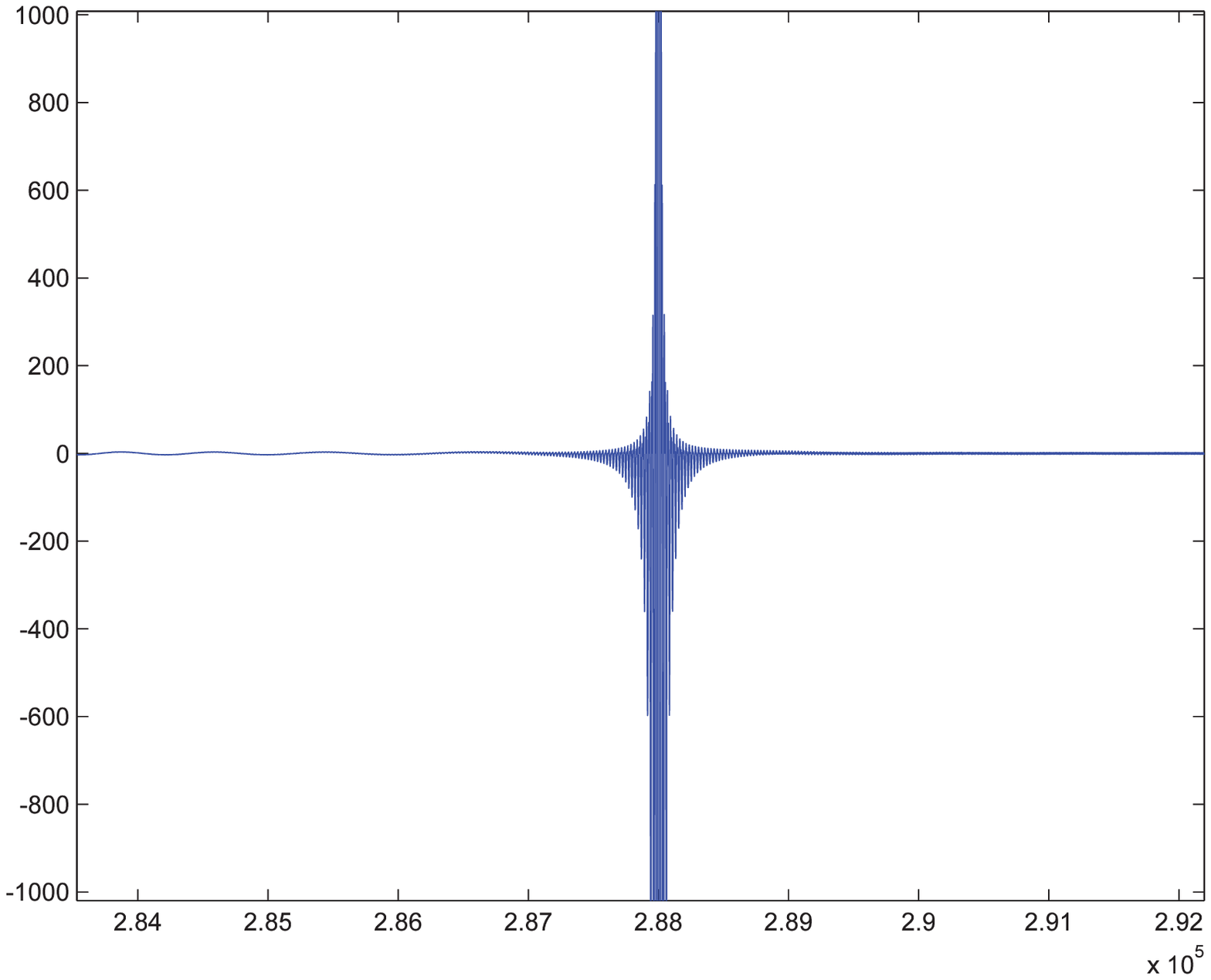}
    }
    \caption{Comparison of convolution result. The maximum values of the two convolution result are set to be identical.}
    \label{fig:dummy}
  \end{figure}
Our simulation indicates that correlating the received signals with its triangularly windowed version outperforms the correlator using the original one. The comparison is shown in Fig.~$5$.\par
\begin{figure}[!ht]
    \subfloat[Correlator output at $O_{1}$ towards the first wall\label{subfig-1:dummy}]{%
      \includegraphics[width=0.4\textwidth]{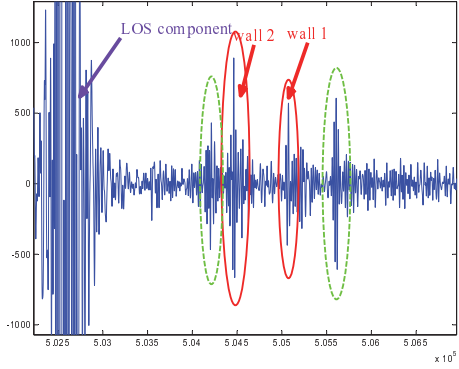}
    }
    \hfill
    \subfloat[Correlator output at $O_{2}$ towards the second wall\label{subfig-2:dummy}]{%
      \includegraphics[width=0.4\textwidth]{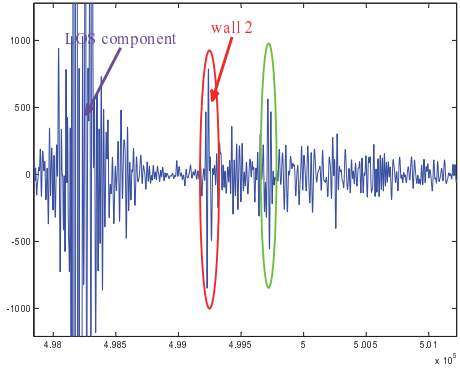}
    }
    \hfill
    \subfloat[Correlator output at $O_{3}$ towards the third wall\label{subfig-3:dummy}]{%
      \includegraphics[width=0.4\textwidth]{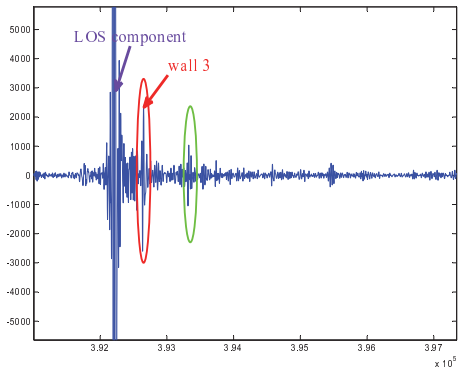}
    }
    \caption{Sample of correlator output: Peaks with solid red ellipses correspond to walls while peaks with dash green ellipses correspond to either noise or higher order echoes}
    \label{fig:dummy}
\end{figure}
Fig.~$6$ is a sample path of the correlator output collected in the room where this experiment is conducted. In Fig.~$6$, peaks marked with red ellipse are desired while those with green ellipse correspond to noise, the ceiling, the floor, higher order echoes or other spurious sources. In our experiment, we use $|m^{(j)}(t)|$ rather than $m^{(j)}(t)$ since the true peaks may be either positive or negative. Local maxima of $|m^{(j)}(t)|$ corresponding to Fig.~$6$ are shown in Fig.~$7$.\par
A heuristic way to detect peaks, summarized in Algorithm 3, is to check the slope of each local maxima. Three requirements are needed for the proposed algorithm: 1) the minimum distance between the device and the walls is no less than $d_{min}$, 2) the minimum TDOA of two detected consecutive echoes is no less than $\Delta t$, 3) the maximum candidate distance corresponding to detected peaks is no more than $d_{max}$. The reason for the requirements is as follows: 1) since the correlation property of the chirp signal is not ideal and the power of the LOS component is much larger than that of reflective components, the distance between the device and the walls should be large enough such that the peaks corresponding to reflective components are not overshadowed by the LOS component, 2) as the power of reflective paths decays rapidly, it is reasonable to restrict the detectable echoes within certain distances which depends on the power of loudspeaker. Given $d_{min} = 0.6$m, $d_{max}=6.5$m and $\Delta t = \frac{0.5\text{m}}{c}$, where $c=346\text{m/s}$, the detection results are marked by arrows in Fig.~$7$. We can see that the desired peaks are always detected. In order to detect as less false peaks as possible, one possible modification is to apply a tapering threshold which decreases as $t$ increases.\par
\begin{algorithm}
\caption{Peak detection algorithm}
\label{Peak}
\begin{algorithmic}[1]
\State find LOS peak $(t_{0}^{(j)},m_{0}^{(j)})$.
\State find local maxima of $|m^{(j)}(t)|$ appearing from $t_{0}^{(j)}+t_{min}$ to $t_{0}^{(j)}+t_{max}$.
\State find all peaks that are \emph{peaky} and store them in $M$
\State set $P = {\O}$
\If $|P| < |M|$
\If {there exist peaks in $M$ whose locations are "close" to any peak in $P$}
\State remove those peaks from $M$.
\Else
\State add the peak with the largest magnitude of $M$ to $P$.
\EndIf
\EndIf
\end{algorithmic}
\end{algorithm}
\begin{figure}[!ht]
    \subfloat[Peaks detected from correlator output at $O_{1}$ towards the first wall\label{subfig-6:dummy}]{%
      \includegraphics[width=0.45\textwidth]{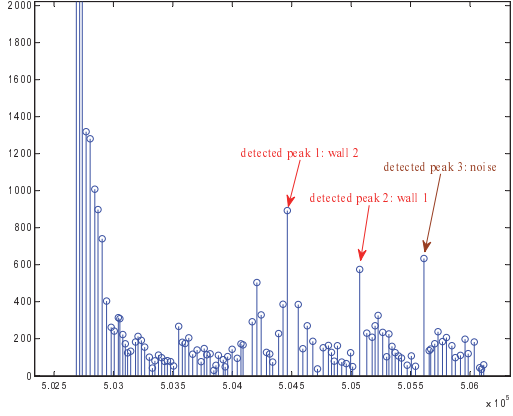}
    }
    \hfill
    \subfloat[Peaks detected from correlator output at $O_{2}$ towards the second wall\label{subfig-7:dummy}]{%
      \includegraphics[width=0.45\textwidth]{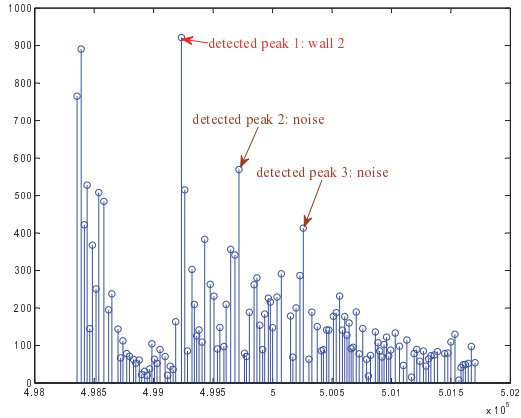}
    }
    \hfill
    \subfloat[Peaks detected from correlator output at $O_{3}$ towards the third wall\label{subfig-8:dummy}]{%
      \includegraphics[width=0.45\textwidth]{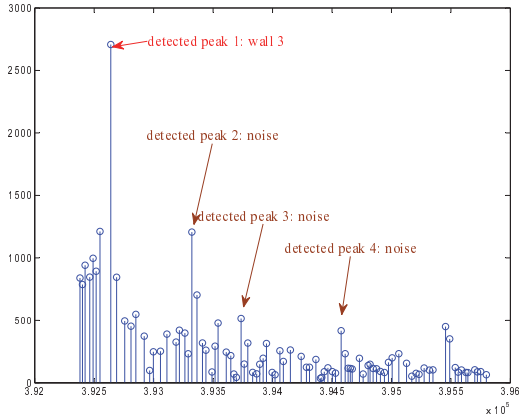}
    }
    \caption{Illustration of the performance of the proposed peak detection algorithm.}
    \label{fig:dummy}
  \end{figure}
\subsection{SLAM Results}
Echoes are collected at $O_{j}\:(j=1,\ldots,4)$ and $d_{j,j+1}\:(j=1,2,3)$ are measured by tape measure. The proposed peak detection algorithm is used to estimate the candidate distances from received signals. Note that the number of detected peaks are much larger than the number of first order echoes. Heuristics are used to remove peaks (e.g. those of small magnitudes) - otherwise, checking all combinations of echoes become computationally prohibitive. The proposed algorithm for SLAM is verified by experiment at $O_{1}$, $O_{2}$, $O_{3}$ and $O_{2}$, $O_{3}$, $O_{4}$. Given $O_{2}$, $O_{3}$, $O_{4}$, we assume that $O_{2}$ is the origin and $O_{3}$ lies on the $x$-axis. Even if some elements of $\mathbf{r}_{j}$ have measurement errors up to $10$cm, SLAM is accomplished with small error of both the room shape and the coordinates of $O_{3}$ and $O_{4}$ with only unlabeled first-order echoes. In the presence of higher order echoes, the proposed algorithm may perform poorly and ambiguity may occur when the variance of $\varphi$ is the only criterion used to determine $f_{j}$'s. With noisy measurement, it is possible that the incorrect echo combination may yield feasible $\theta_{i}$ and $\varphi$ with variance smaller than that of the correct echo combination. Furthermore, an interesting phenomenon is that sometimes the proposed algorithm is unable to provide the correct room shape, but the estimate of $\varphi$ is always close to the true value. This means that better echo labeling approach is needed for robust SLAM. 
As most rooms are regular, we add a heuristic constraint: all the angles of two adjacent walls are between $50^{\circ}$ and $130^{\circ}$. The comparison between the SLAM result and the ground truth is illustrated in Fig.~$8$. The candidate distances are obtained by the peak detection algorithm. Note that the coordinate system in Fig.~$8$(b) is a rotation of that in Fig.~$8$(a) by $135^{\circ}$ counterclockwise. The SLAM results shown in the two figures are rotational images of each other. Experimental result indicate that heuristic constraints such as the above can largely eliminate incorrect combinations.\par
\begin{figure}[!ht]
    \subfloat[SLAM via echoes collected at $O_{1}$, $O_{2}$ and $O_{3}$ \label{subfig-4:dummy}]{%
      \includegraphics[width=0.45\textwidth]{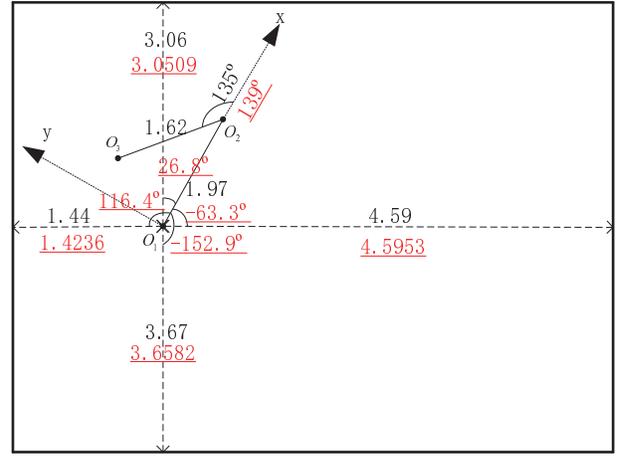}
    }
    \hfill
    \subfloat[SLAM via echoes collected at $O_{2}$, $O_{3}$ and $O_{4}$ \label{subfig-5:dummy}]{%
      \includegraphics[width=0.45\textwidth]{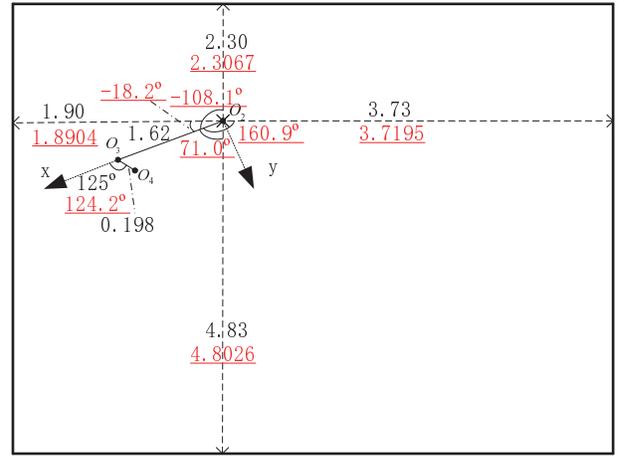}
    }
    \caption{Comparison between the ground truth (black) and experiment result (red underlined)}
    \label{fig:dummy}
  \end{figure}

\section{Conclusion}
This work makes progress in acoustic SLAM using a single mobile device with unlabeled first order echoes. Theoretical guarantee of $2$-D SLAM is established when two path lengths corresponding to three consecutive measurement points are available. Conversely, it was also shown that with only a single distance measurement, $2$-d SLAM with unlabeled first order echoes is not possible for all convex polygons. The result is summarized in Table I.\par
\begin{table}[h]
\centering
\caption{Feasibility of SLAM with unlabeled first order echoes and different geometry knowledge}
\begin{tabular}{ | c | c | p{'1.0cm'} |}
\hline
geometry knowledge	& any convex polygon \\ \hline
$d_{12}$, $d_{23}$, $d_{13}$ & Yes \\  \hline
$d_{12}$, $d_{23}$ & Yes  \\ \hline
$d_{12}$ & No  \\ \hline
none & No \\ \hline
\end{tabular}
\end{table}

While theoretical guarantee can be established for the noiseless case, the proposed algorithm needs to be enhanced to ensure a fully autonomous $2$-D SLAM. Two particular issues that need to be further addressed include the robustness with respect to measurement noise and the computational complexity when a large number of peaks are detected at each measurement location.
\label{sec:refs}

\bibliographystyle{IEEEbib}
\bibliography{Echo}

\end{document}